\title[Neural Gaits]{Neural Gaits: Learning Bipedal Locomotion via \\ Control Barrier Functions and Zero Dynamics Policies}
\author{%
 \Name{Ivan Dario {Jimenez Rodriguez}$^{*\dagger}$} \Email{ivan.jiemenez@caltech.edu} \\
 \Name{Noel Csomay-Shanklin$^{*\dagger}$} \Email{noelcs@caltech.edu} \\
  \Name{Yisong Yue$^{\dagger\ddag}$} \Email{yyue@caltech.edu} \\
  \Name{Aaron D. Ames$^{\dagger}$} \Email{ames@caltech.edu}\\
 \addr $^\dagger$Caltech, Pasadena, CA, USA \quad\quad $^\ddag$Argo AI%
}
\begin{document}

\maketitle

\vspace{-6mm}
\begin{abstract}%
This work presents Neural Gaits, a method for learning dynamic walking gaits through the enforcement of set invariance that can be refined episodically using experimental data from the robot.
We frame walking as a set invariance problem enforceable via control barrier functions (CBFs) defined on the reduced-order dynamics quantifying the underactuated component of the robot: the zero dynamics.  
Our approach contains two learning modules: one for learning a policy that satisfies the CBF condition, and another for learning a residual dynamics model to refine imperfections of the nominal model.  
Importantly, learning only over the zero dynamics significantly reduces the dimensionality of the learning problem while using CBFs allows us to still make guarantees for the full-order system. 
The method is demonstrated experimentally on an underactuated bipedal robot, where we are able to show agile and dynamic locomotion, even with partially unknown dynamics.
\end{abstract}

\begin{keywords}%
  bipedal locomotion, zero dynamics, safety, robotics
\end{keywords}

\blfootnote{$^*$ These authors contributed equally to this work.}
\vspace{-4mm}
\section{Introduction}

Realizing bipedal locomotion on legged robots is difficult due to the compounded complexity of nonlinear underactuated dynamics coupled with the hybrid nature of walking. 
Underactuation makes the application of classic nonlinear control approaches challenging, necessitating the use of offline optimization to generate periodic walking gaits. Due to the combinatorics of contact conditions resulting from the hybrid dynamics, feasibility of this optimization problem requires either fixing the contact times and positions (which can be vulnerable to perturbations) or expensive planning through the set of possible contact points.
Pushing this offline optimization problem online allows for reactive controllers but requires the use of reduced-order models that limit formal guarantees. 
Despite impressive examples of implementations that deal with bipedal walking in practice, general bipedal locomotion with formal performance guarantees remains an open problem.

\textbf{Prior Work in Control.} In the control literature, bipedal locomotion follows two general branches: walking with guarantees of stability and predictive control approaches. 
Walking with guarantees usually relies on solving optimization programs offline to generate stable (periodic) gaits \citep{Hereid2017FROST}.
Above all, this approach relies on constraining walking to be a periodic orbit with assumed exponential stability on the underactuated coordinates of the robot.
This underlying assumption can be problematic in safety critical settings when the gait must satisfy hard constraints such as staying on predetermined stepping stones \citep{csomay-shanklin_episodic_nodate,NGUYEN2015147} and also precludes different walking modes such as period-two walking and, more generally, aperiodic locomotion \citep{xiong2019orbit,ames_first_2017}.
This is particularly important given that disturbance rejection can require aperiodic behaviors \citep{4307016}.
Predictive control approaches on the other hand are able to avoid the aforementioned limitations by planning trajectories and/or policies online, and have shown great promise for quadrupedal robots \citep{di2018dynamic, grandia2019feedback}. Their application to bipedal robots is comparatively sparse however, and has predominantly required static stability \citep{tedrake2015closed,scianca2020mpc} or simplified models to mitigate the computational complexity \citep{kuindersma2016optimization, apgar2018fast, xiong20213d}. This leads to challenges when seeking formal guarantees for dynamic bipedal locomotion in the presence of model mismatch between the planning and low-level control layers.

\textbf{Prior Work in Learning.}
Prior work in machine learning has produced impressive results towards realizing legged locomotion using reinforcement learning \citep{lee_learning_2020,DBLP:journals/corr/abs-2011-01387,DBLP:journals/corr/abs-2103-15309,heess2017emergence}.
These methods use relatively simple reward functions along with sophisticated simulations to generate large amounts of data to train a policy capable of traversing a variety of terrains.
Still, these algorithms can be fragile when facing environments outside of the training dataset and are data inefficient due to not exploiting the full dynamics structure.
These challenges make it difficult to reliably apply these methods on complex hardware systems.
Other works have attempted to use reinforcement learning to train a parameterization of a Control Barrier Function (CBF) Control Lyapunov Function (CLF) Quadratic Program Controller (CBF CLF QP) \citep{choireinforcement,csomay-shanklin_episodic_nodate}. Differing from these works, this paper does not learn a projection of the modeling error onto the CLF and CBF constraints; instead we learn the projection of our modeling error on the zero dynamics. Furthermore, we do not specify a desired trajectory but rather provide a set of barriers that imply walking as emergent behavior.

\begin{figure}
    \centering
    \includegraphics[width=\textwidth]{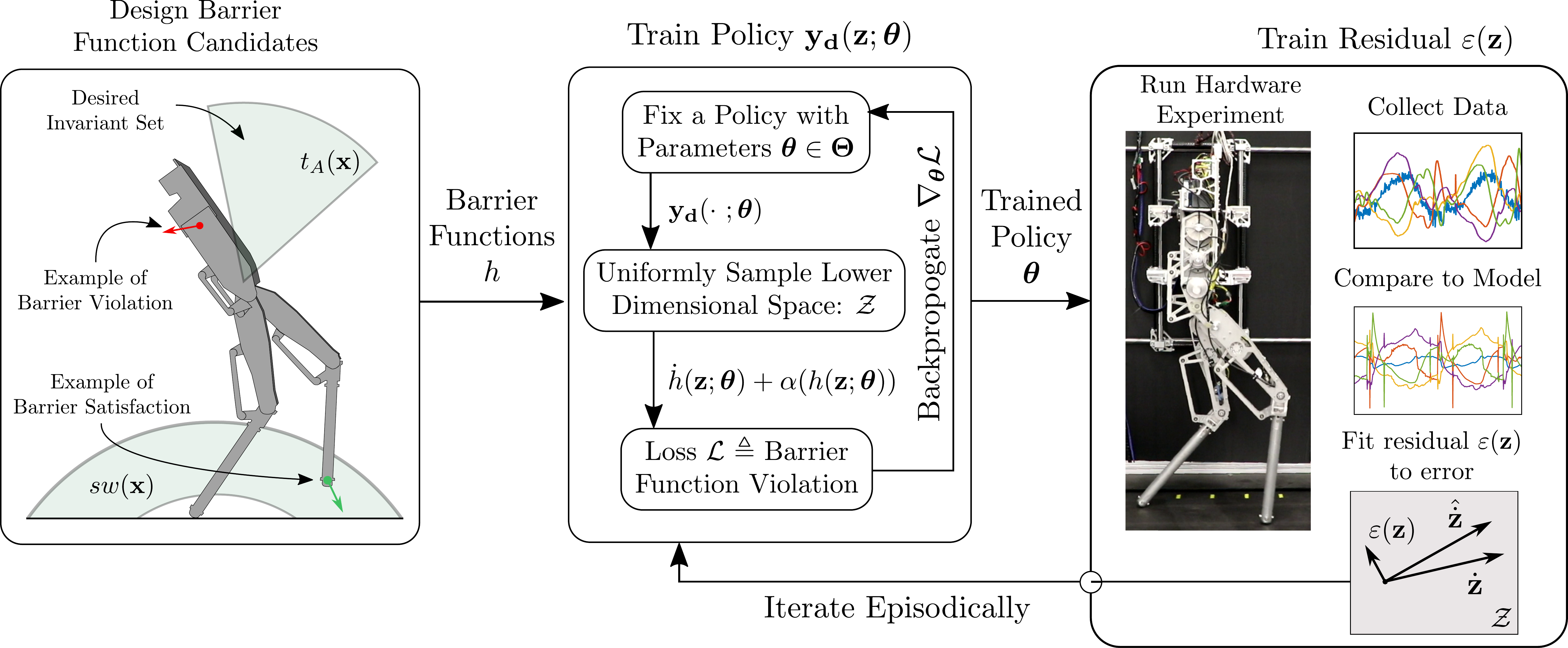}
    \vspace{-20pt}
    \caption{A depiction of the Neural Gaits framework. \textbf{Left:} Designing barrier function candidates that we use to formally describe walking. \textbf{Middle:} Training a policy capable of satisfying all the barrier condition in the zero dynamics state space where the constraint is enforced.
    \textbf{Right:} Collecting hardware data to train a residual zero dynamics model. We then refine the policy episodically using the augmented model.}
    \label{fig:overview}
\end{figure}

\textbf{Our Contributions.}
In this work we are instead interested in automatically discovering good walking policies by integrating learning with control-theoretic formulations of stable walking.
Rather than optimizing a simple reward/cost function, our policies learn to satisfy algebraic forward set invariance conditions, certified by control barrier functions, that characterize good walking behavior (as illustrated in the first block of \cref{fig:overview}).
This integrated approach offers three potential benefits.
First, we are able to certifiably handle impact uncertainty by training policies that satisfy conditions for good impact behavior on a region surrounding the nominal impact guard.
Second, our sampling-based approach is focused on a lower-dimensional state space which can be more data efficient and, under the assumption that walking is a form of set invariance, can produce policies with performance certificates.
The latter point contrasts with offline or predictive trajectory generation perspectives where, even if you enforce control theoretic conditions over the predicted trajectory, you cannot certify performance over a set surrounding the trajectory, thereby breaking set invariance guarantees even under small perturbations. Furthermore, leveraging zero dynamics significantly reduces the dimensionality of the learning problem while remaining compatible with the use of control barrier functions, thus enabling guarantees for the full-order system (see \cref{sec:learn_eps}). 
Third, rather than relying on the artificial constraint of periodic orbits, we are able to characterize walking solely as set invariance while retaining stability guarantees. 
The combined benefit is a reliable and efficient approach for designing gaits that can be deployed on hardware platforms.

Our proposed approach, called \emph{Neural Gaits}, is composed of two learning modules.
The first module trains a policy (which generates walking gaits) to minimize the violations of forward set invariance, implemented using control-theoretic barrier conditions as shown in the second block of \cref{fig:overview}.
In doing so, we can guarantee forward set invariance, which implies (under suitable assumptions) indefinite stable walking.
The second module trains a residual dynamics model to refine imperfections of the current dynamics model based on hardware experiments as represented by the arrow between block three and two in \cref{fig:overview}.
Both modules are then iterated on episodically with hardware experiments in the loop.
We also build upon recent work in training ODE-based systems (of which locomotive walking is an instance of), such as LyaNet \citep{lyanet} and Neural ODEs \citep{chen_neural_2019}, in order to develop an effective training approach.

We empirically demonstrate our approach on the AMBER-3M hardware platform \citep{ambrose_toward_2017} with partially unknown dynamics. We show that the resulting policy is capable of making the robot walk under significant model mismatch and adapts to improve barrier satisfaction across episodes. To the best of our knowledge, this is the first successful demonstration of integrated learning and control for bipedal locomotion with stability guarantees.

\section{Preliminaries}
\label{sec:prelim}
We provide a brief introduction of zero dynamics, hybrid dynamical systems, and control barrier functions, which are necessary fundamentals to understand the proposed formulation in \cref{sec:proposed_method}.
\subsection{Output and Zero Dynamics}
\label{sec:prelim:output_and_zdyn}
Consider the general nonlinear ordinary differential equation:
\begin{align}
    \dot{\gs} &= \gf(\gs, \gu),\quad\quad
    \gs(0) = \gs_0, \label{eq:general_dyn}
\end{align}
with states $\gs \in \gX \subseteq \R^n$, inputs $\gu \in \gU \subseteq \R^m$, and dynamics $\bm f: \gX \times \gU \to \R^n$ with $\bm f$ locally Lipschitz in both arguments. For mechanical systems, we specialize to the control-affine case:
\begin{align}
    \label{eqn:eom}
    \dot{\bm x}  &= \bm f(\bm x) + \bm g(\bm x) \bm u,
\end{align}
where $\bm f: \pX \to \R^n$ and $\bm g: \pX \to \mathbb{R}^{n\times m}$ are assumed to be locally Lipschitz. Denoting a parameter in a parameter space $\bm \theta\in\bm\Theta$, we can define a collection of $k$ outputs $\bm y: \pX \times \bm{\Theta} \to \R^k$ parameterized by $\bm \theta$ that we would like to converge to zero as:
\begin{align}
\label{eq:output_coords}
        \bm y(\bm x; \bm \theta) &= \bm {y_a}(\bm x) - \bm {y_d}(\bm x; \bm{\theta}),
\end{align}
where $\bm{y_a}:\pX\to\R^k$ are the measured outputs, and $\bm{y_d}:\pX \times \bm{\Theta} \to\R^k$ are the desired outputs. For locomotion, the outputs are typically taken either as joint angles (as done in this work), or as center of mass and foot positions. For the policy $\bm y_d$ learned in this work and shown in the center block of \cref{fig:overview}, $\bm{\theta}$  corresponds to neural network parameters.
Although all the concepts may be extended to systems with valid decomposition into output and zero dynamics coordinates (which includes all mechanical systems), for simplicity the remainder of the exposition will be restricted to the setting used in this work, namely with $k=4$ and $\bm y_a$ taken to be the actuated joint angles of the robot. For a complete description of output coordinates and zero dynamics, we  refer to \citep{isidori_elementary_1995}.

Given these outputs $\bm y$, we can separate the actuated and the unactuated coordinates for the robot, which are shown in Figure~\ref{fig:hzd_decomposition}a. As these outputs are \textit{vector relative degree} 2, we can define error coordinates $\bm \eta_i:\pX \to \mathcal{N}_i \subseteq \R^2$ for $i=1,\ldots 4$ as $\bm \eta_i = \begin{bmatrix} y_i^\top, & \dot{y}_i^\top \end{bmatrix}^\top$, as well as the collection of errors $\bm \eta = \begin{bmatrix} \bm\eta_1^\top, & \dots, & \bm\eta_4^\top \end{bmatrix}^\top$. Then, there exist 2 linearly independent functions $z_i:\pX\to\mathcal{Z}_i\subseteq\R$ for $i=1,2$ such that $\nabla_{\mathbf{x}}z_i(\bm x)g(\bm x) \equiv 0$, and  $\nabla_{\bm x} z_i(\bm x)$ is linearly independent from $\nabla_{\bm x} \eta_{i,j}(\bm x)$ for $i=1,\ldots,4$ and $j=1,2$.%
\begin{figure}
    \centering
    \includegraphics[width=\textwidth]{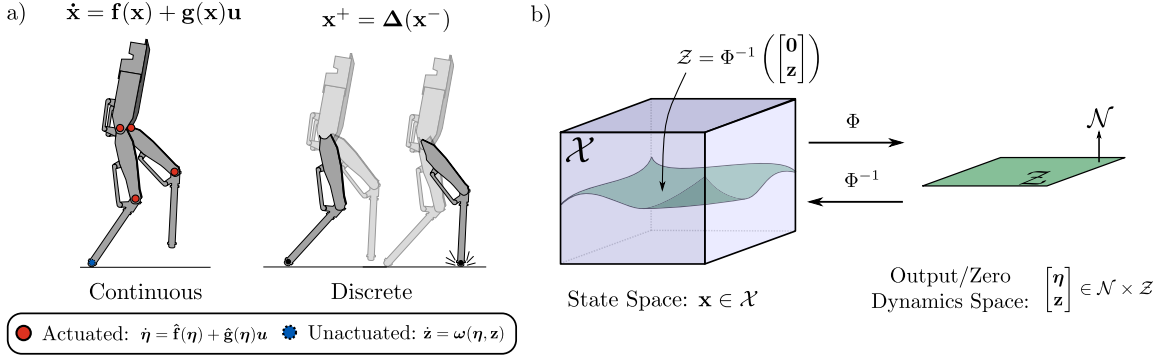}
    \vspace{-20pt}
    \caption{
    \textbf{a)} The continuous and discrete phases of the robot, with actuated ($\bm \eta$) and unactuated ($\bm z$) coordinates depicted. 
    \textbf{b)} The diffeomorphism $\bm \Phi$, and the relationship between state space coordinates and output/zero dynamics coordinates.
    }
    \label{fig:hzd_decomposition}
\end{figure}
We can then construct a diffeomorphism\footnote{. This is differentiable in the first argument and differentiable almost everywhere in the second argument.} $\bm \Phi:\pX\times\boldsymbol\Theta\to\mathcal{N}\times \mathcal{Z}$:
\begin{align*}
\begin{bmatrix}\bm \eta \\ \bm z\end{bmatrix} = \begin{bmatrix}\bm \Phi_{\bm \eta}(\bm x; \bm \theta) \\ \bm \Phi_{\bm z}(\bm x)\end{bmatrix} \triangleq \bm \Phi(\bm x; \boldsymbol \theta),~~\qquad\bm x = \bm \Phi^{-1}\left(\begin{bmatrix}\bm \eta\\ \bm z\end{bmatrix}; \boldsymbol \theta\right),
\end{align*}
as shown in Figure~\ref{fig:hzd_decomposition}b. Under this coordinate transformation, the system dynamics become:
\begin{align}
\label{eq:diff_dyn_projection}
    \begin{bmatrix}
    \dot{\bm \eta} \\ \dot{\bm z}
    \end{bmatrix} = 
    \begin{bmatrix}
    \bm{\hat{f}}(\bm \eta; \bm \theta) + \bm{\hat{g}}(\bm \eta; \bm \theta)\bm u \\
    \bm{\omega}(\bm \eta, \bm z; \bm \theta)
    \end{bmatrix},
\end{align}
where $\bm{\hat{f}}, \bm{\hat{g}}$ and $\bm{\omega}$ are the projection of the dynamics through the diffeomorphism $\bm \Phi$.%

The \textit{zero dynamics manifold} $\mathcal{Z}\subset\pX$ is thus the space where errors have been driven to zero:
\begin{align*}
\mathcal{Z} = \{\bm x\in \gX:~\bm \eta (\bm x) = 0\},
\end{align*}
as seen in Figure~\ref{fig:hzd_decomposition}a. Observe that for $\bm z\in\mathcal{Z}$, we have that $\dot{\bm z} = \bm{\omega}(\bm 0, \bm z;\bm \theta)$. The power of the method of zero dynamics lies in that it allows for guarantees about the full nonlinear dynamics by considering only a subspace of significantly smaller dimensionality \citep{isidori_elementary_1995}. Notice that although the input does not appear in the zero dynamics $\bm \omega$ in \eqref{eq:diff_dyn_projection}, the parameters of the policy $\bm \theta$ do. This realization motivates the use of the policy as a way to influence the zero dynamics and enforce the desired barrier functions, as introduced below. Finally, in this work we will learn residual dynamics $\varepsilon(\bm z)$ on the zero dynamics manifold for a corrected zero dynamics $\dot{\bm z} = \bm{\omega}(\bm 0, \bm z) + \varepsilon(\bm z)$ that compensate for modeling error. This process is captured in the episodic iteration of \cref{fig:overview}.

\subsection{Hybrid Dynamics}
Walking consists of continuous evolution with discrete impacts occurring as contact is made and broken (e.g, the feet with the ground). This sequence of continuous and discrete dynamics is shown in Figure~\ref{fig:ZDandBarriers}a can be modeled in the language of \textit{hybrid systems} as:
\begin{align*}
    \hybrid = \begin{cases}\dot{\bm x} = \bm f(\bm x) + \bm g(\bm x) \bm u & \bm x\in \mathcal{D}\backslash\mathcal{S} \\ \bm x^+ = \Delta(\bm x^-) & \bm x\in\mathcal{S}\subset\mathcal{D},\end{cases}
\end{align*}
where $\mathcal{D} \subset\mathcal{X}$ is the domain where $\gs(t)$ evolves. The \textit{guard}, $\mathcal{S}\subset\mathcal{X}$, corresponds to the set of states where the foot comes in contact with the floor. The \textit{reset map}, $\Delta:\mathcal{S} \to \mathcal{D}$ models the instantaneous sign flip of velocities observed when two rigid bodies collide (the foot with the ground). 
Furthermore, $\hybrid$ can be projected through the diffeomorphism $\bm \Phi$ to exploit the  decomposition into output and zero dynamics. For more details, we refer to \citep{westervelt_feedback_2018}.

\begin{figure}[!t]
    \centering
    \includegraphics[width=\textwidth]{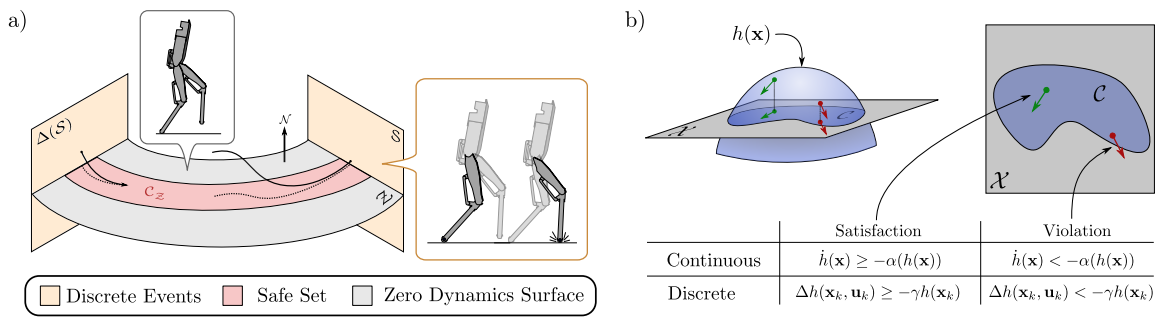}
    \vspace{-20pt}
    \caption{\textbf{a)} The guard $\mathcal{S}$, reset map $\Delta(\mathcal{S})$, and safe set $\mathcal{C}_\mathcal{Z}$ are visualized in the state space decomposed into output $\mathcal{N}$ and zero dynamics $\mathcal{Z}$ coordinates. \textbf{b)} A safe set defined as regions of the state space (gray square) where $h$ is positive (blue region). Satisfying the CBF condition implies that the flows/discrete updates of the system will not leave the safe set (although may approach the boundary). Violations imply a flow that could potentially leave the safe set.}
    \label{fig:ZDandBarriers}
\end{figure}

\subsection{Control Barrier Function Certificates}
\label{sec:cbf}
Barrier function certificates allow us to make the notion of \textit{safety} rigorous in the context of the dynamical system in \cref{eq:general_dyn}. 
We begin by specifying a set that we wish to render safe:
\begin{align}
   \safeSet = \{\gs \in \gX : h(\gs) \ge 0\} \subset \gX,
   \label{eq:safe_set}
\end{align}
where $h:\gX \to \R$ is a continuously differentiable function. In the case of bipedal walking, safe sets can describe conditions such as admissible torso angles and reasonable foot placement as shown in the first block of \cref{fig:overview}. We assume that the compact set $\safeSet$ is nonempty, has a non-empty interior,
and does not contain isolated fixed-points. We say that $\safeSet$ is \textit{safe} or \textit{forward invariant} if $x(t_0) \in \safeSet$ implies that $x(t) \in \safeSet$ for all $t \ge t_0$. With this, we have the following condition for safety (see \citep{ames_control_2019} for a brief history of this approach):
\begin{definition}[\textit{Control Barrier Function (CBF), \citep{ames2016control}}]
Consider $\safeSet$ as defined in \cref{eq:safe_set} where the continuously differentiable function $h$ has nonvanishing gradients $Dh(\gs) \neq 0$ for all $\bm \gs$ in the boundary of $\safeSet$ defined as $\partial \safeSet = \{\gs \in \gX : h(\gs) = 0\}$. The function $h$ is a \textit{Control Barrier Function} (CBF) for \cref{eq:general_dyn} on $\safeSet$ if there exists $\alpha\in\mathcal{K}_{\infty,e}$ such that for all $\gs\in \safeSet$:
\begin{align}
\label{eqn:CBF}
\dot{h}(\gs) = \frac{\partial h}{\partial \gs}(\gs)\gf(\gs,\gu) \ge -\alpha(h(\gs)).
\end{align}
\end{definition}

A function $\alpha$ is in the family of class-$\mathcal{K}_{\infty, e}$ functions if for all $a<b$,  $\alpha(a) < \alpha(b)$, $\alpha(0) = 0$, $\lim_{a\to\infty} \alpha(a) = \infty$ and $\lim_{a\to-\infty} \alpha(a) = -\infty$. In defining the CBF, we can parameterize the set of all feedback controllers guaranteeing safety as:
\begin{align}
    K_{cbf}(\gs) = \left\{\gu \in \gU : \dot h(\gs, \gu) \ge -\alpha(h(\gs))\right\}.
    \label{eq:k_cbf_set}
\end{align}
Similarly, this notion can be extended to discrete-time dynamical systems via:
\begin{align}
\label{eqn:CBF_disc}
    \Delta h(\bm x_k, \bm u_k)  \triangleq h(\bm x_{k+1}) - h(\bm x_k) \ge -\gamma h(\bm x_k),~~0<\gamma\le 1,
\end{align}
as seen in Figure~\ref{fig:ZDandBarriers}b. This leads to the following necessary and sufficient condition for safety:
\begin{theorem}[\textit{Control Barrier Function Certificates, \citep{ames2016control}}]
Given a feedback controller $\gu=k(\gs)$, the set $\safeSet$ is safe if and only if $\gu(\gs) \in K_{cbf}(\gs)$.
\end{theorem}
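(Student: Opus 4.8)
The plan is to prove the two implications separately, working with the continuous-time condition \eqref{eqn:CBF} (the hybrid/discrete statement \eqref{eqn:CBF_disc} follows by an analogous but simpler recursion). Write $\bm f_{cl}(\bm x) \triangleq \bm f(\bm x) + \bm g(\bm x)k(\bm x)$ for the closed-loop vector field, which is locally Lipschitz under the standing assumptions (with $k$ locally Lipschitz), so through each $\bm x_0 \in \safeSet$ there is a unique maximal solution $\bm x(\cdot)$; since $\safeSet$ is compact, any such solution that remains in $\safeSet$ is defined for all $t \ge t_0$. The unifying fact behind both directions is Nagumo's tangency theorem: for the closed set $\safeSet = \{h \ge 0\}$ with $Dh$ non-vanishing on $\partial\safeSet$, forward invariance under $\bm f_{cl}$ is equivalent to $Dh(\bm x)\bm f_{cl}(\bm x) \ge 0$ for all $\bm x \in \partial\safeSet$.

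\textbf{Sufficiency.} Assume $k(\bm x) \in K_{cbf}(\bm x)$ for all $\bm x \in \safeSet$, fix $\bm x_0 \in \safeSet$, and set $v(t) \triangleq h(\bm x(t))$. As long as the trajectory lies in $\safeSet$, \eqref{eqn:CBF} gives $\dot v(t) = Dh(\bm x(t))\bm f_{cl}(\bm x(t)) \ge -\alpha(v(t))$ with $v(t_0) \ge 0$. Compare $v$ against the maximal (or, if $\alpha$ is not Lipschitz, the minimal) solution $y$ of the scalar ODE $\dot y = -\alpha(y)$, $y(t_0) = v(t_0)$; since $\alpha \in \mathcal{K}_{\infty,e}$ vanishes only at the origin, $y \equiv 0$ is an equilibrium, so $y(t_0) \ge 0$ forces $y(t) \ge 0$ for all $t$, and the comparison lemma yields $h(\bm x(t)) = v(t) \ge y(t) \ge 0$. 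Thus the trajectory stays in the compact set $\safeSet$, is defined for all $t \ge t_0$, and $\safeSet$ is forward invariant, i.e.\ safe. In the discrete case, \eqref{eqn:CBF_disc} directly gives $h(\bm x_{k+1}) \ge (1-\gamma)h(\bm x_k)$ with $0 < \gamma \le 1$, so $h(\bm x_0) \ge 0 \Rightarrow h(\bm x_k) \ge 0$ for all $k$.

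\textbf{Necessity.} Assume $\safeSet$ is safe under $k$. For $\bm x_0 \in \partial\safeSet$ we have $h(\bm x(t_0)) = 0$ and $h(\bm x(t)) \ge 0$ for $t \ge t_0$; since $h \in C^1$ and $\bm x(\cdot)$ is $C^1$, the right derivative $\dot h(\bm x_0) = Dh(\bm x_0)\bm f_{cl}(\bm x_0)$ exists and must be $\ge 0 = -\alpha(h(\bm x_0))$ (the Nagumo condition; $Dh \ne 0$ on $\partial\safeSet$ and the no-isolated-fixed-points hypothesis exclude degenerate boundary behavior). Hence $k(\bm x) \in K_{cbf}(\bm x)$ on $\partial\safeSet$. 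On the interior $\{h > 0\}$ the constraint $\dot h(\bm x) \ge -\alpha(h(\bm x))$ has a strictly negative right-hand side and is not automatic from forward invariance alone; here one uses that $\bm x \mapsto Dh(\bm x)\bm f_{cl}(\bm x)$ is bounded on the compact set $\safeSet$ and non-negative near $\partial\safeSet$ to select $\alpha \in \mathcal{K}_{\infty,e}$ — indeed a linear $\alpha(r) = \lambda r$ with $\lambda$ large enough — dominating the worst-case decrease rate on each level set $\{h = r\}$, giving $k(\bm x) \in K_{cbf}(\bm x)$ on all of $\safeSet$; read with $\alpha$ fixed a priori, the claim is properly "$K_{cbf}$ for some admissible $\alpha$". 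The discrete converse is immediate: $h(\bm x_{k+1}) \ge 0$ gives $\Delta h(\bm x_k) \ge -h(\bm x_k) \ge -\gamma h(\bm x_k)$ for $\gamma$ near $1$.

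\textbf{Main obstacle.} Sufficiency is routine modulo the comparison lemma; its one delicate point is that $\alpha$ is merely continuous and strictly increasing, not Lipschitz, so the majorizing scalar ODE need not have unique solutions — handled by comparing against minimal/maximal solutions, which still have the required sign property because $0$ is an equilibrium. The harder direction is necessity: promoting the purely topological fact that trajectories do not leave $\safeSet$ to a pointwise differential inequality on $\partial\safeSet$ is the Nagumo step, which genuinely needs the regularity assumptions on $h$ and $\safeSet$, and then one must exhibit a single class-$\mathcal{K}_{\infty,e}$ function valid uniformly over the compact set $\safeSet$ — precisely where compactness of $\safeSet$ and continuity of $h,\bm f,\bm g,k$ are essential.
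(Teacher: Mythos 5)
The paper offers no proof of this statement—it is quoted directly from \citep{ames2016control}—so there is no in-paper argument to compare against and your proof must stand on its own. Your overall architecture (Nagumo's theorem as the bridge between invariance and the boundary inequality, the comparison lemma for sufficiency, a converse construction of $\alpha$ for necessity) is the standard route taken by the cited source. One simplification: for sufficiency you do not need the comparison lemma at all, since on $\partial\safeSet$ the CBF inequality reads $Dh(\gs)\bm f_{cl}(\gs) \ge -\alpha(0) = 0$, which is exactly Nagumo's subtangentiality condition for the closed set $\{h\ge 0\}$ under the locally Lipschitz closed-loop field; the comparison argument is only needed for stronger conclusions such as asymptotic stability of $\safeSet$, and your handling of the non-Lipschitz $\alpha$ via minimal solutions there is correct.

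The one genuine gap is in your necessity direction. You assert that $\gs \mapsto Dh(\gs)\bm f_{cl}(\gs)$ is ``non-negative near $\partial\safeSet$'' and conclude that a linear $\alpha(r) = \lambda r$ with $\lambda$ large enough works. Nagumo gives non-negativity only \emph{on} the boundary; the quantity can be negative arbitrarily close to it. For instance, a closed loop with $\dot h = -\sqrt{h}$ still renders $\{h \ge 0\}$ forward invariant, yet violates $\dot h \ge -\lambda h$ near $h = 0$ for every finite $\lambda$, so no linear $\alpha$ exists there (while $\alpha(r) = \mathrm{sign}(r)\sqrt{|r|}$ does work). The correct construction sets $m(r) = \sup\{-Dh(\gs)\bm f_{cl}(\gs) : \gs \in \safeSet,\ h(\gs) \le r\}$, uses compactness of $\safeSet$ plus the boundary condition to show $\limsup_{r\to 0^+} m(r) \le 0$, and then dominates $\max\{m,0\}$ by a class-$\mathcal{K}_{\infty,e}$ function, which need not be linear. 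Your closing hedge—that the ``only if'' direction must be read as membership in $K_{cbf}$ for \emph{some} admissible $\alpha$—is the right reading of the theorem (with $\alpha$ fixed a priori the converse is false), but the linear-$\alpha$ step as written would fail and should be replaced by the construction above.
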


\section{Neural Gaits: Locomotion as a Barrier Satisfiability Problem}
\label{sec:proposed_method}

We now present our Neural Gaits approach, as depicted in \cref{fig:overview}. Instead of taking a controller-design perspective, we will take one of reference trajectory design -- specifically, we fix a controller structure $\bm u(\bm x;\bm \theta)$, which is parameterized by $\bm \theta$ through the definition of $\bm y(\bm x; \bm \theta)$.
Our method relies on the assumption that good walking can be characterized as a forward invariant set.
Thus, the first step of the method requires us to define a set of barrier functions that imply good walking.
In the following discussion, we will only consider barrier functions defined on the zero dynamics surface, i.e. $h:\mathcal{Z}\to\mathbb{R}$ as defined when the error coordinates are zero ($\bm \eta = \bm 0)$.
Importantly, the guarantees made on $\mathcal{Z} \subset \gX$ have relevance to the full state space, as is made precise in Section~\ref{sec:guarantees}.

After constructing a collection of barrier functions, we train a policy $\bm{y_d}$ that ensures the system stays safe by minimizing the violation of the barrier function conditions \eqref{eqn:CBF} and \eqref{eqn:CBF_disc} over regions of the state space.
The resulting policy renders the intersection of the safe set for all barriers forward invariant.
Finally, to mitigate model mismatch, we train a residual term $\varepsilon({\bm z})$ on the zero dynamics.
These corrected zero dynamics are then used to refine the existing policy episodically until the desired walking performance is achieved.

\subsection{Learning the Policy \texorpdfstring{$\bm{y_d}$}{yd}}

Our learning approach builds upon and unifies two lines of work. 
The first studies how to characterize good walking behavior as set invariance  via a collection of barrier function candidates \citep{ames_first_2017}.
The second studies how to train neural ODEs to satisfy control-theoretic properties such as Lyapunov stability \citep{lyanet}, which we extend to the barrier setting.

\paragraph{Learning in the Zero Dynamics.}
Recall from \cref{sec:prelim:output_and_zdyn} that the error and zero dynamics coordinates are computed from the states using the diffeomorphism $\bm \Phi$, which
only depends on the policy through $\bm \Phi_{\bm \eta}$.
We thus parameterize the policy as a function of the projection of the state onto the zero dynamics manifold and parameters $\bm \theta$, i.e. $\bm{y_d}(\bm x) = \bm{y_d}(\bm \Phi_{\bm z}(\bm x); \bm \theta)$. In other words, $\bm y_d$ only depends on the unactuated degrees of freedom of the system (e.g., the unactuated joint in \cref{fig:hzd_decomposition}) rather than the full state.
Therefore, when there is no error (i.e. $\bm \eta = 0$) we have that $\bm z\in\mathcal{Z}$ with dynamics $\dot{\bm z} = \bm \omega(\bm 0, \bm z; \bm \theta)$.
Note, importantly, that even when the error coordinates are zero, the zero dynamics are still a function of $\bm{y_d}$ and therefore $\bm \theta$. This implies that the zero dynamics are influenced by the parameters of the policy even though the control inputs are not present in $\bm \omega$.

\paragraph{Learning to Satisfy Barrier Conditions.}
Taking inspiration from \citep{hsu2015control} and \citep{ames_first_2017}, we assume that walking can be characterized as set invariance via a collection of barrier function candidates $\mathcal{H} = \{h_i\}_{i=1}^N$ (see \cref{table:barriers} discussed later in \cref{sec:application}).
To each $h_i$, we associate a \textit{region at risk}  $ \overline{\cal{S}}_i \subseteq \cal{Z}$ where the barrier function is enforced. We define a set of neural network parameters that render the region at risk safe under the barrier definition:
\begin{align}
\label{eq:safe_yd_params}
    \bm \Theta_i = \{\bm \theta \in \bm \Theta : \forall_{\bm{z} \in \overline{\cal{S}}_i} \quad \dot h_i(\bm z ; \bm \theta) \ge -\alpha(h_i(\bm z;\bm \theta))\}.
\end{align}
In other words, each $\bm \Theta_i$ corresponds to the set of policy parameters that render the set $\overline{\mathcal{S}}_i$ safe.

Thus, our learning problem is equivalent to finding a set of parameters $\bm \theta \in \bigcap_{i=1}^N \bm \Theta_i$ that render the system safe in all regions at risk.  Similar to the Lyapunov Loss studied in \citep{lyanet}, we introduce the concept of \textit{Barrier Loss} as a learning signal for training:
\begin{definition}[Barrier Loss]
For a set of barrier function candidates $\mathcal{H} = \{h_i\}_{i=1}^N$ and corresponding regions at risk $\overline{\cal{S}}_i \subset \mathcal{Z}$ on the zero dynamics, a Barrier Loss, $\loss: \bm \Theta \to \R_{\geq 0}$, is defined as:
\begin{align}
    \loss(\bm\theta) = \sum_{i=1}^N \int_{\overline{\cal{S}}_i} \max\{0,  - \dot h_i(\bm z ;  \bm \theta) - \alpha(h_i(\bm z;\bm \theta)) \} d\bm{z}. \label{eq:loss}
\end{align}
\vspace{-4mm}
\label{def:loss}
\end{definition}
When a choice of parameters $\bm\theta$ achieves zero Barrier Loss, then the safety of the zero dynamics is guaranteed by satisfying the forward invariance condition of the barrier functions:
\begin{theorem}[Zero Barrier Loss Implies Safety of Zero Dynamics]
\label{thm:zero_barrier_loss}
The zero dynamics is guaranteed to be safe in all its regions at risk 
 if and only if we find a $\bm{\theta}^*$ that attains  $\loss(\bm{\theta}^*) = 0$. 
\end{theorem}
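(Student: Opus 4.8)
The plan is to reduce the claim to two elementary observations and then invoke the Control Barrier Function Certificates theorem. The first observation is that the integrand of the Barrier Loss \eqref{eq:loss}, namely $\bm z\mapsto\max\{0,\,-\dot h_i(\bm z;\bm\theta)-\alpha(h_i(\bm z;\bm\theta))\}$, is nonnegative and continuous on each region at risk $\overline{\mathcal S}_i$, so $\loss(\bm\theta^*)=0$ forces this integrand to vanish identically on every $\overline{\mathcal S}_i$. The second is that its identical vanishing is, verbatim, the inequality defining membership $\bm\theta^*\in\bm\Theta_i$ in \eqref{eq:safe_yd_params}. Since the zero dynamics $\dot{\bm z}=\bm\omega(\bm 0,\bm z;\bm\theta^*)$ (optionally corrected by the learned residual $\varepsilon$) carry no control input, rendering $\overline{\mathcal S}_i\cap\{h_i\ge 0\}$ forward invariant is, by that theorem, equivalent to exactly this inequality holding on $\overline{\mathcal S}_i$; chaining the equivalences over $i=1,\dots,N$ yields the stated ``if and only if''.

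For the continuity claim I would argue that $h_i$ is continuously differentiable by the standing CBF assumptions, that $\bm\omega(\bm 0,\cdot;\bm\theta^*)$ is continuous in $\bm z$ because it is the pushforward through the diffeomorphism $\bm\Phi(\cdot;\bm\theta^*)$ of the locally Lipschitz vector fields $\bm f,\bm g$, and hence that $\dot h_i(\bm z;\bm\theta^*)=\tfrac{\partial h_i}{\partial\bm z}(\bm z)\,\bm\omega(\bm 0,\bm z;\bm\theta^*)$ is continuous; composing with the continuous maps $\alpha\in\mathcal K_{\infty,e}$ and $t\mapsto\max\{0,t\}$ finishes it. Each summand of $\loss$ is then the integral of a nonnegative continuous function over $\overline{\mathcal S}_i$, which I take to have nonempty interior (consistent with it being a ``region'' of $\mathcal Z$). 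A nonnegative integral of a nonnegative continuous function over a set of positive Lebesgue measure vanishes iff the function is zero almost everywhere, and continuity upgrades this to everywhere on $\overline{\mathcal S}_i$; the boundary $\partial\overline{\mathcal S}_i$ is then handled by continuity as well. This establishes $\loss(\bm\theta^*)=0\iff\bm\theta^*\in\bigcap_{i=1}^N\bm\Theta_i$. The remaining step feeds each pair $(h_i,\overline{\mathcal S}_i)$ into the Control Barrier Function Certificates theorem, using in the forward direction that the condition is sufficient for forward invariance and in the converse direction that it is necessary, to conclude safety of the zero dynamics on all regions at risk exactly when $\bm\theta^*\in\bigcap_i\bm\Theta_i$.

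The work here is not computational; the delicate points are in the hypotheses. The ``almost everywhere $\Rightarrow$ everywhere'' upgrade needs $\overline{\mathcal S}_i$ to have positive measure, which is why I read ``region at risk'' as having nonempty interior. More importantly, the ``only if'' direction is sensitive to the class-$\mathcal K_{\infty,e}$ function: safety of the zero dynamics certified by \emph{some} $\alpha'$ need not make the Barrier Loss with the particular $\alpha$ of \eqref{eq:loss} vanish, so the equivalence should be read with ``safe'' meaning ``rendered forward invariant as certified by the CBF condition \eqref{eqn:CBF} with this fixed $\alpha$'', which is the operative notion in the rest of the paper. Finally, applicability of the certificates theorem requires $Dh_i\neq 0$ on $\partial\{h_i\ge 0\}$ together with the compactness, nonempty-interior, and no-isolated-fixed-point assumptions on the safe sets, all of which are standing hypotheses; checking them for the specific barriers of Section~\ref{sec:application} is routine.
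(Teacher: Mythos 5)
Your proposal is correct and follows essentially the same route as the paper's proof: both arguments rest on the observation that the integrand is a continuous nonnegative function, so each integral (and hence the sum) vanishes iff the integrand vanishes identically on $\overline{\mathcal{S}}_i$, which is precisely the pointwise CBF inequality and hence safety. You are in fact somewhat more careful than the paper on two points the authors gloss over --- that $\overline{\mathcal{S}}_i$ must have positive measure for the ``zero integral implies zero function'' step, and that the ``only if'' direction only holds when ``safe'' is read as the CBF condition with the fixed $\alpha$ of the loss rather than safety certified by an arbitrary class-$\mathcal{K}_{\infty,e}$ function.
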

\begin{proof}
Notice that for all $i \in \{1 \ldots N \}$ both $\dot{h}_i$  and $\alpha \circ h_i$ are continuous functions. This implies that for all $\bm z \in \mathcal{Z}$ and $\bm \theta \in \bm \Theta$, $\max\{0,  - \dot h_i(\bm z ;  \bm \theta) - \alpha(h_i(\bm z;\bm \theta)) \}$ is a continuous non-negative real function. It is well known that a continuous non-negative real function will have zero integral if and only if it is the zero function. We specialize this statement for the terms in our loss as follows:
\begingroup\makeatletter\def\f@size{10.2}\check@mathfonts
\begin{align}
    \forall_{\bm z \in \overline{\cal{S}}_i} \max\{0,  - \dot h_i(\bm z ;  \bm \theta) - \alpha(h_i(\bm z;\bm \theta)) \} = 0 \Leftrightarrow \int_{\overline{\cal{S}}_i} \max\{0,  - \dot h_i(\bm z ;  \bm \theta) - \alpha(h_i(\bm z;\bm \theta)) \} d\bm{z} = 0
\end{align}
\endgroup
It is clear that the sum in $\loss(\bm\theta)$ will be zero if an only if each integral term is zero since each integral is a non-negative function. Thus we can conclude that $\loss(\bm\theta^*)=0$  if and only if $$\forall_{i \in \{1 \ldots N \}, \bm z \in \overline{\cal{S}}_i} \max\{0,  - \dot h_i(\bm z ;  \bm \theta^*) - \alpha(h_i(\bm z;  \bm \theta^*)) \} = 0.$$ For any barrier $h_i$ and $\bm z \in \cal{Z}$ you can see that  $\max\{0,  - \dot h_i(\bm z ;  \bm \theta^*) - \alpha(h_i(\bm z;\bm \theta^*)) \} = 0$ implies that: 
\begin{align}
    - \dot h_i(\bm z ;  \bm \theta^*) - \alpha(h_i(\bm z;\bm \theta^*)) \leq 0 \implies
    \dot h_i(\bm z ;  \bm \theta^*) \geq -\alpha(h_i(\bm z;\bm \theta^*)), \nonumber
\end{align}
i.e. 
the safety condition for the barrier is satisfied.
\end{proof}

\subsection{Instantiation for Bipedal Walking}
\label{sec:application}
\cref{table:barriers} describes the barrier functions used in our experiments, which take inspiration from \citep{ames_first_2017}. 
We depict some of these conditions on the robot in \cref{fig:barriers_and_pssf}a.
As all barrier functions $h_i : \mathcal{X} \to \mathbb{R}$ are enforced on the zero dynamics surface, we will write them implicitly as $h_i~\circ~\bm \Phi^{-1}(\bm 0, ~\cdot~;\bm \theta) : \mathcal{Z} \to \R$ for $i \in \{1 \ldots N\}$ with $N=5$ in this instantiation. 
In \cref{table:barriers}, $t_A$ represents the torso angle, and $p_x$ and $p_z$ represent the $x$ and $z$ position of the swing foot, respectively. In addition to continuous time conditions, various conditions needed to be enforced on the guard, namely enforcing the location of the guard, symmetry of the model before and after impact, and a guard mapping condition. 
Interestingly, although these barriers would be relative degree two on the full state dynamics, they are directly enforceable as relative degree one barriers on the zero dynamics. This can be seen by treating $\bm {y_d}$ as the input to the zero dynamics, and observing that the zero dynamics themselves are functions of $\bm {y_d}$.

\begin{table} [b]
\begin{center}
\begin{small}
\begin{tabular}{|c|c|c| }
 \hline
 Torso Angle & $\{\bm z\in \mathcal{Z}_\mathcal{O}\}$ & $-\frac{\pi}{10} \le \theta_t(\bm z) \le 0.05$ \\ 
 \hline
Swing Foot Clearance & $\{\bm z\in \mathcal{Z}_\mathcal{O}\}$ & $0 \le (p_x(\bm z)-c_x)^2 + (p_z(\bm z)-c_z)^2 - r^2 \le 0.3$ \\
 \hline
 Impact Mapping & $\{\bm z\in \mathcal{S}_\epsilon\}$ & $-0.15 \le \Delta(\bm z) + \bm z \le 0.15$ \\
 \hline
 Symmetry & $\{\bm z\in \mathcal{S}_\epsilon\}$ & $\bm y(\bm z) = \bm y(\Delta(\bm z))$\\
 \hline 
 Foot on Guard & $\{\bm z\in \mathcal{S}_\epsilon\}$ & $p_z(\bm z) = 0$ \\
 \hline
\end{tabular}
\end{small}
\caption{Barrier functions used to characterize bipedal walking, and the associated regions at risk in which they are enforced. The first two are enforced over the continuous dynamics, and the bottom three in a buffered region of the guard. The strict equality on symmetry and the foot on guard conditions were also enforced as a training loss. \label{table:barriers}}
\end{center}
\end{table}

Note that these barrier functions $h$ are defined over the space $\mathcal{Z}$, as, given a policy $\bm{y_d}(~\cdot~;\bm{\theta}):\mathcal{Z}\to\mathbb{R}^4$, the mapping $\bm \Phi^{-1}:\mathcal{N}\times\mathcal{Z}\to\pX$ is uniquely defined. 
We take inspiration from reduced order models, and specifically the notion of orbital energy \citep{pratt_derivation_2007} to define a set $\mathcal{Z}_\mathcal{O} \subset \mathcal{Z}$ with reasonably bounded orbital energies as our first region at risk. 
We also define the set $\mathcal{S}_\epsilon \subset \mathcal{Z}_\mathcal{O}$ which contains the part of the guard in  $\mathcal{Z}_\mathcal{O}$ as well as a small region around it where discrete-time guard conditions are enforced. We learn policies that satisfy the barrier conditions on these regions of the zero dynamics by penalizing the violation of the constraints shown in \cref{table:barriers}. Notice that penalizing guard constraints over a region results in policies that are robust to impact modeling error since the policy must be prepared to change stance foot at any point in $\mathcal{S}_\epsilon$ rather than just the guard $\mathcal{S}$.

\textbf{Learning Optimization Details.}
Evaluating the Barrier Loss in \cref{eq:loss} requires solving an integral that is in general intractable.
Instead, we use Monte Carlo sampling to approximate the integral.
Since our approach follows Algorithm 1 of \citep{lyanet} we refer to it for more details while noting that we optimize for the Barrier Loss rather than the Lyapunov Loss.
A key ingredient in the Monte Carlo sampling approach in \citep{lyanet} is defining a compact support set to sample from (i.e., where the barrier condition should be satisfied). In our work this compact support set directly corresponds to the region at risk for each barrier condition.
\vspace{-2mm}
\subsection{Learning the Residual Zero Dynamics \texorpdfstring{$\bm \varepsilon(\bm z)$}{varepsilon(z)}}
\label{sec:learn_eps}
As outlined in \cref{fig:overview}, we can improve upon the nominal zero dynamics model by collecting trajectories of the robot executing the resulting policy in hardware.  We can then use those trajectories to learn a residual error term on the zero dynamics $\hat{\dot{\bm z}} = \omega(0, \bm z ; \bm \theta) + \varepsilon(\bm z)$ where $\varepsilon$ is the learned residual term. We model this residual term using Neural ODEs \citep{chen_neural_2019}, which are naturally compatible with our policy learning approach. We can iterate this process multiple times, alternating between learning $\bm \theta$ and $\varepsilon$ until the resulting policy achieves the desired behavior.

\begin{figure}[t!]
    \centering
    \includegraphics[width=\textwidth]{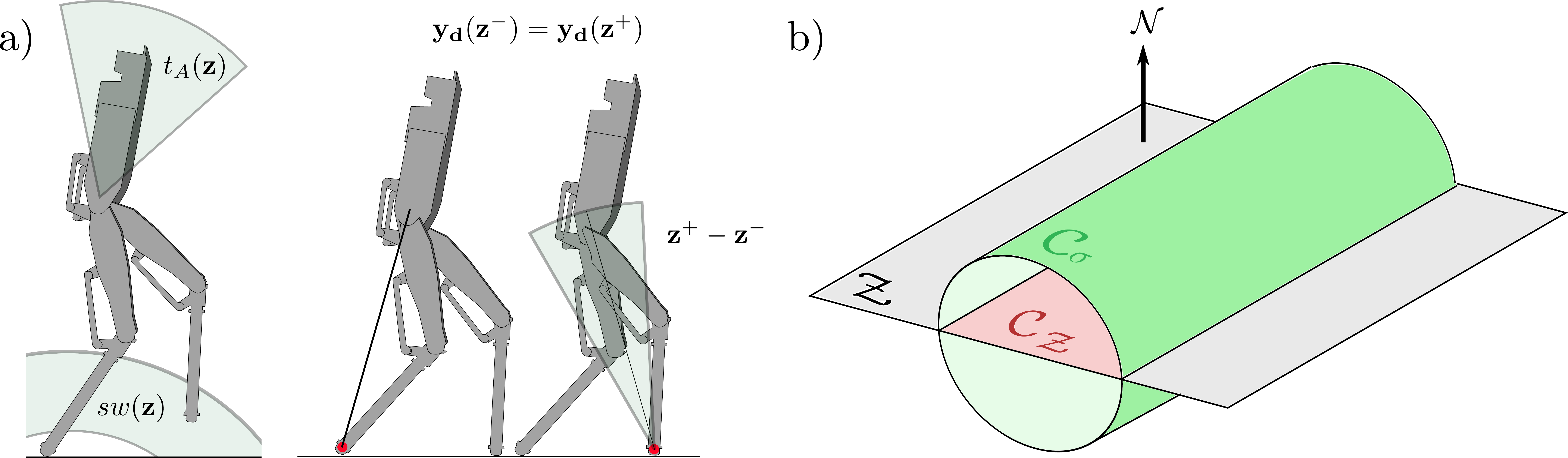}
    \vspace{-20pt}
    \caption{\textbf{a)} A depiction of the barrier functions used to enforce walking as set invariance. On the left are the two continuous time barrier conditions, and on the right the three barrier conditions enforced at the guard. The red dot on the foot indicates the stance foot, and \textbf{b)} The safe set on the zero dynamics $\mathcal{C}_{\mathcal{Z}}$, as certified by the proposed learning method, and the combined safe set $\mathcal{C}_{\sigma}$, and described in Theorem 5.}
    \label{fig:barriers_and_pssf}
\end{figure}
\vspace{-2mm}
\subsection{Providing Guarantees in the Full State Space}
\label{sec:guarantees}
Assuming a controller which exponentially converges the outputs $\bm y(\bm x)$, for example a feedback linearizing or control Lyapunov function based controller, the converse Lyapunov theorem allows us to construct a Lyapunov function $V_{\bm\eta}:\mathcal{N}\to\mathbb{R}$ verifying the exponential convergence of the outputs. Along with a certificate of safety $h_Z:\mathcal{Z} \to \mathbb{R}$ on the zero dynamics space, we can construct a set in the combined space $\mathcal{N}\times \mathcal{Z}$ which is safe, and has a barrier function certificate. This is described in the following theorem.

\begin{theorem}%
\label{thm:zero_dyn_barriers_pssf}
Let $V_{\bm \eta}=\bm \eta^\top\bm P\bm \eta:\mathcal{N}\to\mathbb{R}$ be an exponential control Lyapunov function for the output dynamics with $\dot V_{\bm \eta} \le -\gamma V_{\bm \eta}$ and $h_Z:\mathcal{Z}\to\mathbb{R}$ be a barrier function on the zero dynamics with safe set $\mathcal{C}_\mathcal{Z}$. Then, there exists a constant $\sigma\ge 0$ and $c\ge 0$ such that if $\dot h_Z(z) \ge -\alpha h_Z(z) + c$ with $\alpha\le \frac{\gamma}{2}$, the barrier function $ h (\bm \eta, \bm z) =  h_Z(\bm z) - \sigma V_{\bm \eta}(\bm \eta)$ is safe with set $\mathcal{C}_\sigma$.%

\end{theorem}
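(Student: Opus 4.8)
The plan is to build the composite barrier $h(\bm\eta,\bm z) = h_Z(\bm z) - \sigma V_{\bm\eta}(\bm\eta)$ and show that its time derivative along the full closed-loop dynamics in $\eqref{eq:diff_dyn_projection}$ satisfies a barrier inequality of the form $\dot h \ge -\bar\alpha\, h$ on the set $\mathcal{C}_\sigma = \{(\bm\eta,\bm z) : h(\bm\eta,\bm z)\ge 0\}$, which by the CBF certificate theorem renders $\mathcal{C}_\sigma$ forward invariant. First I would differentiate: $\dot h = \dot h_Z(\bm z) + \tfrac{\partial h_Z}{\partial \bm z}\big(\bm\omega(\bm\eta,\bm z;\bm\theta) - \bm\omega(\bm 0,\bm z;\bm\theta)\big) - \sigma \dot V_{\bm\eta}$. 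The middle term is the ``perturbation'' of the zero dynamics off the manifold; since $\bm\omega$ is locally Lipschitz and $h_Z$ is continuously differentiable on a compact region at risk, on a bounded neighborhood we get a bound $\big|\tfrac{\partial h_Z}{\partial\bm z}(\bm\omega(\bm\eta,\bm z;\bm\theta)-\bm\omega(\bm 0,\bm z;\bm\theta))\big| \le L\|\bm\eta\|$ for some constant $L\ge 0$. This is exactly where the slack constant $c$ in the hypothesis $\dot h_Z(\bm z)\ge -\alpha h_Z(\bm z) + c$ is consumed: it must dominate the worst-case off-manifold perturbation.

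Next I would use the Lyapunov structure $V_{\bm\eta} = \bm\eta^\top\bm P\bm\eta$, which gives $\lambda_{\min}(\bm P)\|\bm\eta\|^2 \le V_{\bm\eta} \le \lambda_{\max}(\bm P)\|\bm\eta\|^2$, hence $\|\bm\eta\| \le \sqrt{V_{\bm\eta}/\lambda_{\min}(\bm P)}$. Combining with $\dot V_{\bm\eta}\le -\gamma V_{\bm\eta}$, the derivative becomes
\begin{align*}
\dot h \ge -\alpha h_Z(\bm z) + c - L\sqrt{V_{\bm\eta}/\lambda_{\min}(\bm P)} + \sigma\gamma V_{\bm\eta}.
\end{align*}
Now rewrite $h_Z(\bm z) = h(\bm\eta,\bm z) + \sigma V_{\bm\eta}$ to obtain $\dot h \ge -\alpha h - \alpha\sigma V_{\bm\eta} + \sigma\gamma V_{\bm\eta} + c - L\sqrt{V_{\bm\eta}/\lambda_{\min}(\bm P)}$. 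Using $\alpha \le \gamma/2$ gives $-\alpha\sigma V_{\bm\eta} + \sigma\gamma V_{\bm\eta} \ge \tfrac{\sigma\gamma}{2} V_{\bm\eta}$, so it suffices to show $\tfrac{\sigma\gamma}{2}V_{\bm\eta} + c - L\sqrt{V_{\bm\eta}/\lambda_{\min}(\bm P)} \ge 0$ for all relevant $\bm\eta$. The function $t\mapsto \tfrac{\sigma\gamma}{2}t - \tfrac{L}{\sqrt{\lambda_{\min}(\bm P)}}\sqrt{t}$ (with $t = V_{\bm\eta}\ge 0$) is a downward-then-upward shape in $\sqrt{t}$ with a finite minimum; choosing $c$ at least the absolute value of that minimum makes the whole expression nonnegative, yielding $\dot h \ge -\alpha h$, i.e. $\alpha$ serves as the (linear, hence class-$\mathcal{K}_{\infty,e}$) comparison function and $\mathcal{C}_\sigma$ is safe. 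I would remark that $\sigma$ controls how ``thick'' a slice of the output space is included — larger $\sigma$ shrinks $\mathcal{C}_\sigma$ — and that $\sigma$ can be taken arbitrarily small, recovering $h_Z$ in the limit.

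The main obstacle is making the compactness and neighborhood arguments rigorous: the Lipschitz bound on $\bm\omega(\bm\eta,\cdot) - \bm\omega(\bm 0,\cdot)$ and the bound on $\nabla h_Z$ only hold on a bounded set, so I need to first establish that $\mathcal{C}_\sigma$ (or at least a sublevel set containing the relevant trajectories) is itself compact and contained in the region where these bounds are valid — otherwise the constants $L$ and the resulting $c$ are not well-defined. This is a standard but slightly delicate ``restrict to a compact invariant sublevel set'' maneuver; I would handle it by intersecting $\mathcal{C}_\sigma$ with a large sublevel set of $V_{\bm\eta}$ and noting that the exponential decay $\dot V_{\bm\eta}\le -\gamma V_{\bm\eta}$ keeps $\bm\eta$ in that sublevel set automatically, so the restriction is without loss of generality. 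A secondary subtlety is whether $h$ has nonvanishing gradient on $\partial\mathcal{C}_\sigma$ as required by the CBF definition; since $\nabla_{\bm\eta} h = -2\sigma\bm\eta^\top\bm P$ vanishes only at $\bm\eta=0$, and there $\nabla_{\bm z}h = \nabla h_Z \ne 0$ on $\partial\mathcal{C}_\mathcal{Z}$ by hypothesis, the gradient is nonvanishing on the boundary, so this condition is inherited.
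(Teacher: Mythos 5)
Your proposal is correct and follows essentially the same route as the paper's proof: differentiate the composite barrier, bound the off-manifold perturbation of the zero dynamics by a Lipschitz term linear in $\|\bm\eta\|$, absorb the cross term $-\alpha\sigma V_{\bm\eta}+\sigma\gamma V_{\bm\eta}$ using $\alpha\le\gamma/2$, and choose $c$ to dominate the minimum of the resulting linear-minus-quadratic expression (your minimization over $t=V_{\bm\eta}$ is equivalent to the paper's over $\|\bm\eta\|_2$ and yields the same constant). Your added remarks on restricting to a compact sublevel set so the Lipschitz constants are well-defined, and on the nonvanishing gradient of $h$ on $\partial\mathcal{C}_\sigma$, are details the paper leaves implicit.
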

\begin{proof}
First note that the derivative of the function is given by:
\begin{align}
    \dot{ h} &= \frac{\partial  h_Z}{\partial \bm z} (\bm z) \bm w(\bm\eta , \bm z) - \sigma\dot V_{\bm \eta}(\bm \eta) \notag \\
    &\ge -\alpha  h_Z(\bm z)+c - \left|\frac{\partial  h_Z}{\partial \bm z} (\bm z) \left(\bm w(\bm\eta , \bm z)-\bm w(0 , \bm z)\right)\right| + \sigma\gamma V_{\bm \eta}(\bm \eta) \notag \\
    &\ge -\alpha  h(\bm \eta, \bm z) + c- L_{ h_Z}L_{\bm \omega_{\bm \eta}}\|\bm\eta\|_2+ \frac{\sigma\gamma}{2} \lambda_{min}(\bm P)\|\bm \eta\|_2^2, \label{eqn:RobustSafeSet}
\end{align}
where the third line follows from Cauchy Schwartz, the fact that $ h_Z$ and $\bm \omega(\bm \eta, \bm z)$ are locally Lipschitz with Lipschitz constants $L_{ h_Z}$ and $L_{\bm \omega_{\bm \eta}}$, respectively, converse Lyapunov, and the assumption that $\alpha\le \frac{\gamma}{2}$. Taking $\beta_1 = L_{ h_Z}L_{\bm \omega_{\bm \eta}}$, and $\beta_2 = \frac{\gamma}{2} \lambda_{min}(\bm P)$, we observe that $-\beta_1\|\bm\eta\|_2 + \sigma \beta_2\|\bm \eta\|_2^2 \ge -\frac{\beta_2^2}{4\sigma\beta_3}\triangleq c$. By taking $c$ defined as such, we achieve the desired result.
\end{proof}

The above theorem motivates the perspective of this work: satisfying barrier function certificates in the zero dynamics enables reasoning about safe sets in the complete state space. Note that the hybrid case is not addressed here, and is an interesting direction for future theoretical work. 

\vspace{-2 mm }\section{Simulation and Experimental Results}

\begin{figure}[t!]
    \centering
    \includegraphics[width=\textwidth]{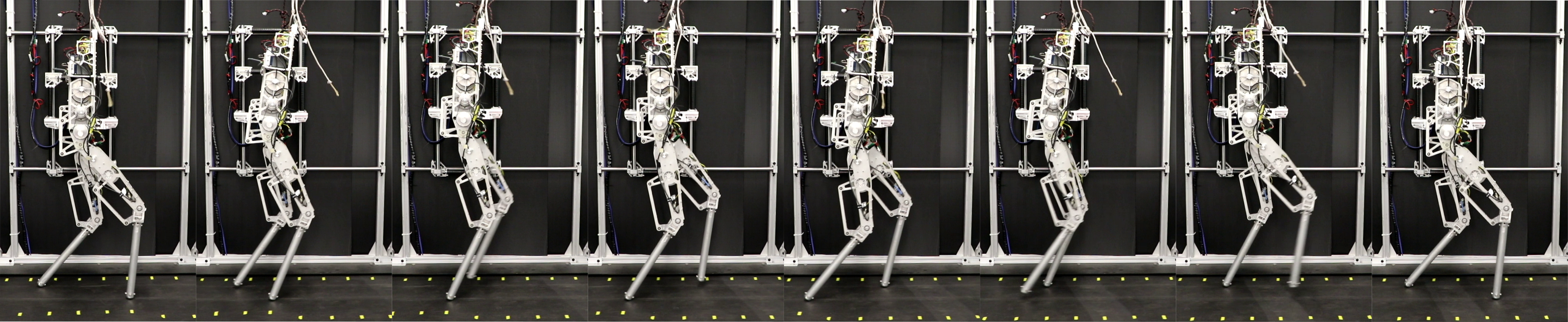}
    \vspace{-20pt}
    \caption{Gait tiles of the neural network encoding the final trained policy running in real time on the AMBER-3M robot. For a video discussing the methodology and summarizing the hardware results, please refer to \citep{video_noauthor_supplementary_nodate}.}
    \label{fig:amber_walking}
\end{figure}
The hardware platform used in this work was the planar underactuated biped AMBER-3M \citep{ambrose_toward_2017}, which has actuators on the hips and knees, and point contact feet. Both in simulation, where the RaiSim \citep{8255551} environment was used, and on hardware, the pipeline went as follows: the zero dynamics coordinate $\bm z$ was estimated, the Neural Network policy $\bm y_d(\bm z;\bm \theta)$ was evaluated, and the desired output values were passed to a PD controller running at 1kHz.
The policy $\bm y_d(\bm z; \bm \theta)$ was randomly initialized and was trained for 1000 epochs. The AdamW optimizer was used in PyTorch \citep{NEURIPS2019_9015} with an initial learning rate of $10^{-2}$, weight decay of $10^{-4}$,  with a learning rate decay schedule at epochs 100, 400, and 800. Initially, the ''gait" had the robots leg flailing randomly in the air, and when integrated resulted in the robot falling over. Once the loss converged, the policy had a loss in the order of $5\times10^{-3}$, and was able to walk stably in the simulation. The neural network ran in closed loop on the hardware platform and was called at approximately 500 Hz to produce desired outputs for the system to track. Unlike simulation, once tested on hardware, the policy resulted in the robot stumbling forward, unable to walk without falling. Data was collected over various trials, after which the methodology proposed in \cref{sec:learn_eps} was used to learn the residual of the model uncertainty, as projected to the zero dynamics space. During this process, Adam and other SGD methods were numerically unstable even with gradient clipping, so Nero \citep{liu_learning_2021} was used instead. 

Once a residual term was learned, a new policy $\bm y_d(\bm z, \bm \theta)$ was trained with the updated dynamics (warm started with the policy from the previous iteration). After convergence, the gait was again tried on hardware. The gait was significantly more stable, and able to walk without assistance; however, the gait was not robust to walking speeds. Therefore, the process was repeated, and again a new policy was learned. When testing that policy, the robot was able to walk on its own, and was robust to different walking speeds. A sample gait is shown on \cref{fig:amber_walking}. The complete code can be found here \citep{git_rodriguez_learning_2021}.

\vspace{-3mm}\section{Conclusion}

In this work, barrier functions, machine learning, and dimension reduction via zero dynamics were combined to provide a novel way of generating walking behaviors for a bipedal robot. 
Our approach used learning in two places: policy design and residual dynamics modeling via data collection on hardware.  
The proposed method culminated in a demonstration of agile and robust locomotion on hardware. Future work includes studying more complex robots, online learning, as well as policy learning for new behaviors (e.g., walking up stairs).

\clearpage
\section{Acknowledgements}
The authors would like to thank Min Dai, Ryan Cosner, and Andrew Taylor for their insightful discussions related to walking, barrier functions, and projection to state safety.

\bibliography{main}

\end{document}